\documentclass[letterpaper, 10 pt, conference]{ieeeconf}  

\usepackage[pdftex,dvipsnames]{xcolor}  
\usepackage[colorinlistoftodos,prependcaption,textsize=tiny]{todonotes}

\IEEEoverridecommandlockouts                              
\overrideIEEEmargins

\usepackage{amsthm}
\usepackage{graphics} 
\usepackage{epsfig} 
\usepackage{times} 
\usepackage{amsmath} 
\usepackage{amssymb}  

\usepackage{bm}
\usepackage{xcolor}
\usepackage{graphicx}
\usepackage[noend]{algpseudocode}
\usepackage{booktabs}
\usepackage{siunitx}
\usepackage{hyperref}
\usepackage{courier}

\usepackage{caption}
\captionsetup{font=small}
\usepackage{lipsum}
\usepackage{multirow}
\usepackage{diagbox} 
\usepackage{subcaption}

\usepackage{array}
\usepackage{arydshln}
\setlength\dashlinedash{0.5pt}
\setlength\dashlinegap{1.5pt}
\setlength\arrayrulewidth{0.5pt}

\usepackage{amsmath,amsfonts}
\usepackage[ruled,linesnumbered]{algorithm2e}
\usepackage[noend]{algpseudocode}

\usepackage{array}
\usepackage{textcomp}
\usepackage{stfloats}
\usepackage{url}
\usepackage{verbatim}
\usepackage{graphicx}

\usepackage{amsthm}
\usepackage{graphics} 
\usepackage{epsfig} 
\usepackage{times} 
\usepackage{amsmath} 
\usepackage{amssymb}  

\usepackage[maxbibnames=99,giveninits=true]{biblatex}
\addbibresource{ref.bib}

\usepackage{bm}
\usepackage{xcolor}
\usepackage{graphicx}
\usepackage{booktabs}
\usepackage{siunitx}
\usepackage{hyperref}
\usepackage{courier}

\usepackage{subcaption}
\usepackage{caption}
\captionsetup{font=small}
\usepackage{lipsum}
\usepackage{multirow}
\usepackage{diagbox} 

\usepackage{array}
\usepackage{arydshln}
\setlength\dashlinedash{0.5pt}
\setlength\dashlinegap{1.5pt}
\setlength\arrayrulewidth{0.5pt}

\newcommand{\R}{\mathbb{R}}

\newcommand{\X}{\mathcal{X}_k}

\newcommand{\tx}{\tilde{x}}

\newtheorem{definition}{\bf Definition}

\newtheorem{theorem}{\bf Theorem}
\newtheorem{lemma}{\bf Lemma}

\usepackage[normalem]{ulem}

\begin{document}
\title{
Robustness Analysis of Classification Using Recurrent Neural Networks with Perturbed Sequential Input  
}

\author{Guangyi Liu, Arash Amini, Martin Tak\'a\v{c}, and Nader Motee 
\thanks{G.L., A.A., and N.M. are with the Department of Mechanical Engineering and Mechanics, Lehigh University, Bethlehem, PA 18015, USA {\tt\small \{gliu,a.amini,motee\}@lehigh.edu}.\endgraf
M.T. is with the Machine Learning Department at Mohamed bin Zayed University of Artificial Intelligence (MBZUAI), 
Masdar City,
Abu Dhabi,
United Arab Emirates
{\tt\small \{takac.mt\}@gmail.com}.
}
}

\maketitle



\begin{abstract}
For a given stable recurrent neural network (RNN) that is trained to perform a classification task using sequential inputs, we quantify explicit robustness bounds as a function of trainable weight matrices. The sequential inputs can be perturbed in various ways, e.g., streaming images can be deformed due to robot motion or imperfect camera lens. Using the notion of the Voronoi diagram and Lipschitz properties of stable RNNs, we provide a thorough analysis and characterize the maximum allowable perturbations while guaranteeing the full accuracy of the classification task. We illustrate and validate our theoretical results using a map dataset with clouds as well as the MNIST dataset.  
\end{abstract}


\section{Introduction}

Real-time perception and classification have been among the most exciting topics in computer vision-based and machine learning-based robotic applications. However, in the most perception-based applications, the uncertainties and perturbations prevail in every section of a learned model from the input perturbation \cite{de2001robust,ramesh1992random} to the numerical error \cite{solomon2015numerical}. In order to ensure reliable performance for these applications, some inevitable questions need to be answered: (i) If input perturbation exists, does the learned framework still exhibit robust performance? (ii) What conditions does the learned model needs to satisfy to achieve robustness? (iii) Does there exist characteristics that can measure the robustness of the learned model? Finding the answer to these questions will significantly facilitate tackling perception-based problems since most learned models suffer from the fragility of the input perturbations \cite{tang2017precision}.

In many applications involving area coverage, consensus, map classification, rendezvous \cite{bock2007xv}, the robot can only sample the localized information of the environment \cite{sunderhauf2018limits}, i.e., partially observation, at each time step. In order to acquire adequate observations, the robot could consider traversing the environment while collecting local information as sequential data and attempting to learn their inter-correlation with recurrent neural networks. The same type of framework that uses localized observations and recurrent neural networks to learn the image and map classification is illustrated in our previous works \cite{mousavi2019multi, mousavi2019layered, liu2020distributed, liu2021classification}, and shows promising performance in various scenarios.

In this paper, we consider the map (image) classification problem with the sampled sequential images as a motivational example, which is illustrated in Fig. \ref{fig:intro}. Instead of treating the learned classification model as a black box and solely focusing on its performance, we consider how it will perform with perturbed input and analyze its robustness in terms of its learned weights from the neural networks. The robustness analysis is achieved by considering the stability properties of the recurrent models that learn the interconnection of past observations and quantifying the classification criterion via the Voronoi partitioning to obtain the robustness conditions. 

\begin{figure}[t]
    \centering
	\includegraphics[width=\linewidth, height = 5cm]{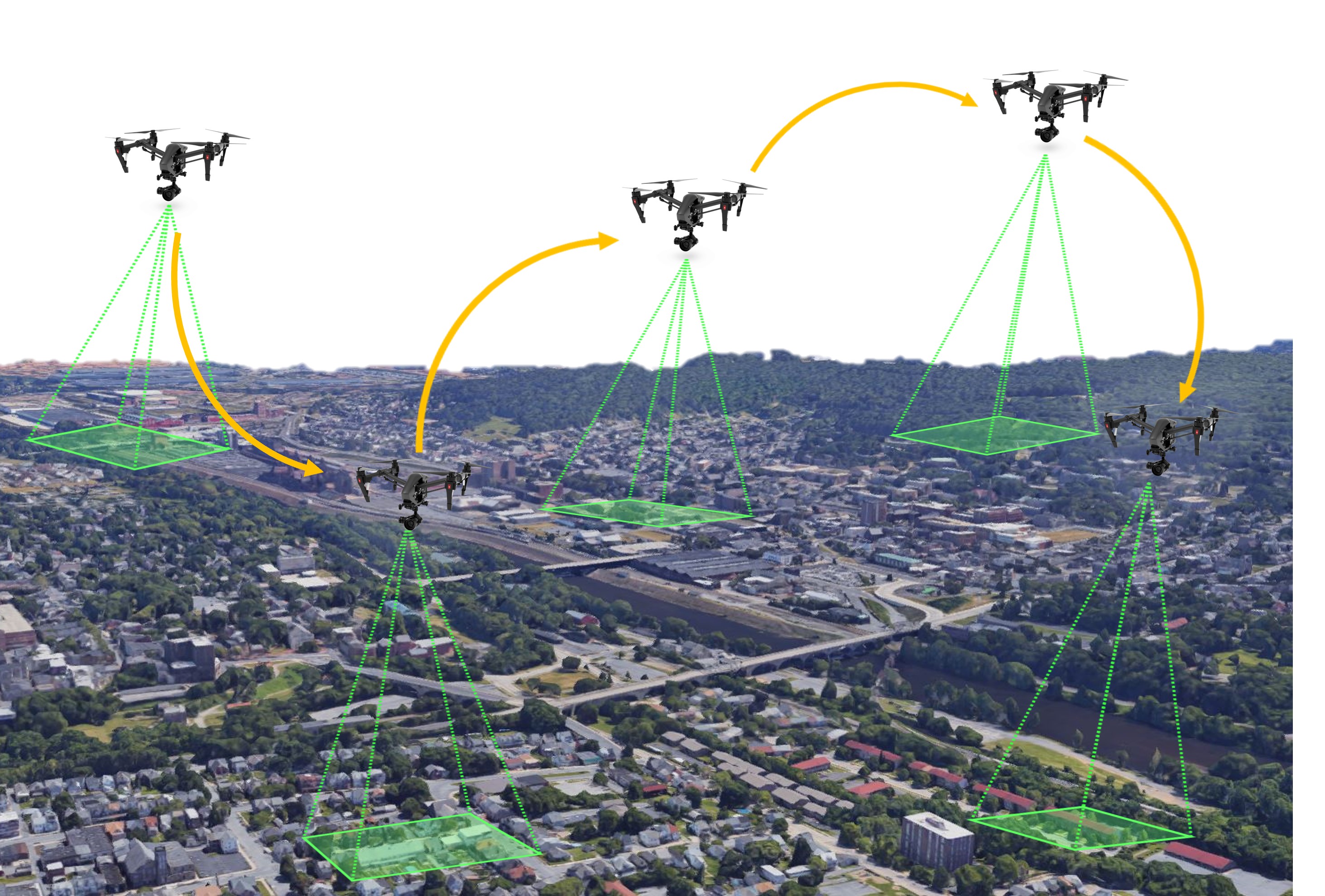}
	\caption{The above figure depicts the motivational application of map classification with the aerial robot. The robot traverses the environment and collects the visual inputs as a sequence (order denoted by arrows).}
    \label{fig:intro}
\end{figure}

The robustness of a classification model can be considered as not miss-classifying under the adversary attacks or perturbations \cite{lowd2005adversarial}, and recent research has made progress on investigating and ensuring the robustness of neural network models under the adversarial attack \cite{xu2018structured,carlini2019evaluating}. Our work exhibits the novelty and differences to these works as: Instead of considering the Boolean classifier, we propose a robustness analysis of multi-class classifier \cite{matyasko2018improved}, in which a novel representation with Voronoi diagram \cite{breitenmoser2010voronoi} is used to construct a quantifiable classification criterion. To ensure the robustness of a learned model, we implement the similar ideas of using expert demonstrations \cite{robey2020learning}, for which we require the perturbed result to stay close to the nominal results, see \S \ref{sec:convergence}. To evaluate the deviation caused by the perturbation at the output stage, we seek the boundedness for every section of the classification model. For instance, inspired by the recent work on constructing a bound for neural network models, to name a few:  \cite{miller2018stable,ko2019popqorn,xu2018structured}, we seek similar boundedness for RNNs, which will provide an error estimate when the statistics of the input perturbation is available \cite{amini2021robust}.  

{\it Our Contributions: }
In this paper, we construct a formal approach to analyze the robustness of the recurrent multi-label classification framework with localized sequential inputs when there exist input perturbations. Furthermore, we also quantify the multi-label classification criterion that uses the $\arg \max$ function. Our analysis shows that a robust classification with full accuracy is guaranteed when the RNNs are stable and the input perturbation is below the maximum allowable deviation. These results motivate us to turn our research efforts to explore further how the classification robustness can show its effect in a closed-loop model where the robot can select its sampling routine based on its observations. 

The rest of the paper is organized as follows. In \S \ref{sec:problem-statement}, we introduce the problem setting and the classification model. The possible origins of perturbations are illustrated in \S \ref{sec:perturbation}. The error estimates and the boundedness of stable RNNs are presented in \S \ref{sec:stable}. Our main result is presented in \S \ref{sec:convergence}, where the quantifiable classification criterion and robust classification conditions are presented. The theoretical findings are validated in \S \ref{sec:case-study} by simulations in both MNIST dataset \cite{lecun1998gradient} and Campus Map dataset \cite{liu2021classification}.

\section{Mathematical Notations}     

The $n-$dimensional Euclidean space with elements $\bm{z} = [z_1, \dots, z_n]^T$ is denoted by $\R^n$, where $\mathbb{R}_{+}$ will denote the positive orthant of $\R^n$. 
The set of standard Euclidean basis for  $\mathbb{R}^{n}$ is represented by $\{\bm{e}_1, \dots, \bm{e}_n\}$. We denote the $n \times n$ identity matrix as $I$ and the vector of all ones as $\bm{1}_n$, respectively. The $i$'th element of a vector $x$ is shown by $x_{i}$ and the $i$'th row of a matrix  $A$ is represented by $(A)_{i}$. The induced matrix norm  by vector norm $\|\hspace{0.05cm}.\hspace{0.05cm}\|$ is also shown by $\|A\|$ \cite{rudin1987real}. Let us define the collection of all feasible probability vectors $p$ \cite{li2015stationary} as {$\mathcal{P}_m = \{p \in \R_{+}^{m} ~|~ p^T \bm{1}_m = 1 \}$}. For a sequence of vectors $x=(x(1), x(2), \ldots, x(T))$, the $\ell_{\infty}$-norm of $x$ is defined by    
\begin{equation}    \label{eq:ell_infty}
        \|x\|_{\ell_{\infty}} = \max_{t \in\{ 1,\dots,T\}}  \|x(t)\|.
    \end{equation}

\section{Problem Statement}     \label{sec:problem-statement}
\begin{figure*}[t]
    \centering
	\includegraphics[width=0.9\linewidth]{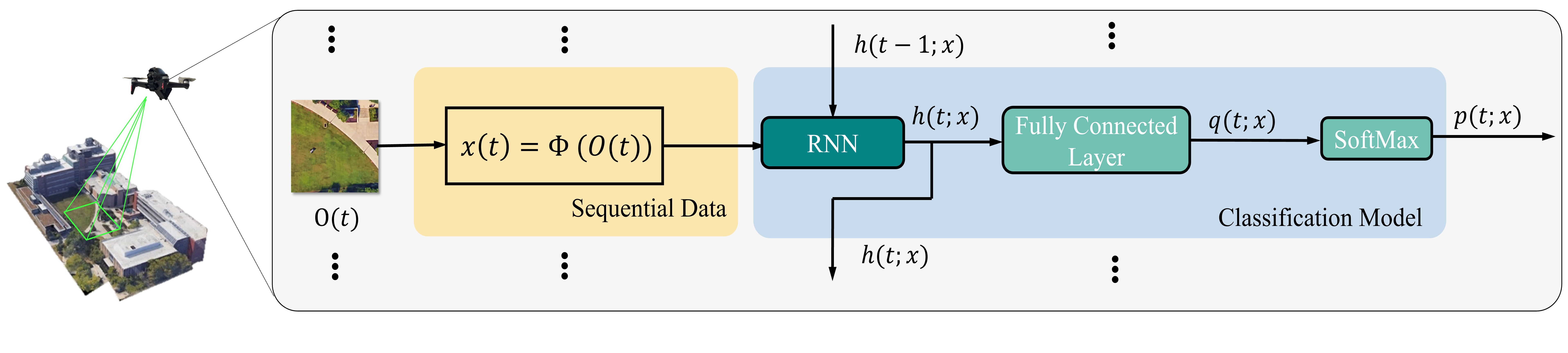}
	\caption{This figure shows the image (map) classification model at time $t$. }
    \label{fig:diagram}
\end{figure*}

Suppose there exist $m$ unique pre-labeled environments for classification purposes. A robot is deployed into the environment to collect samples for classification. The sample, e.g., a vector that contains multiple states of the environment, from data sequence $x$. The data sequence $x$ with length $T$ can be represented in terms of its components as $x = (x(1), x(2), \dots, x(T))$. Let us denote by $\X'$ the set which contains adequate sampled data sequences $x$ from the $k$'th environment, i.e., the training set. Our proposed classifier, depicted in Fig. \ref{fig:diagram}, is trained with $\X'$ for all $k = 1, ..., m$ to classify the label of $x$, which is sampled from an environment with unknown labels. In the classifier, each individual sample of $x$ are fed recurrently to a RNN model \cite{amini2021robust}, whose dynamics can be represented in a compact form by
\begin{align}   \label{eq:LSTM}
    h(t; x) = F \left( h(t-1; x), x(t) \right),
\end{align}
where $h(t;x) \in \R^b$ is the state and $x(t) \in \R^a$ is the input. The (history) state $h(t;x)$ memorizes all the past information about inputs $x$ up to time $t$. The terminal state $h(T;x)$ is used for classification by passing it through a fully-connected layer
\begin{align}\label{eq:q}
    q(T;x) = W_c \: h(T;x) + b_c
\end{align}
and a $\texttt{Softmax}$ function
\begin{align}\label{eq:p}
    p(T;x) = \texttt{Softmax} \left( q(T;x) \right),
\end{align}
where the weight matrix $W_c \in \mathbb{R}^{m \times b}$ and the bias vector $b_c \in \mathbb{R}^{m}$ are trainable. The belief vector $p(T;x)$ is utilized to represent the classification result from the sequence $x$.

Our {\it objective} is to provide a thorough robustness analysis of the classification model using stable RNNs and  quantify their robustness bounds in terms of their trainable weight matrices.


\section{Origins of Perturbations}      \label{sec:perturbation}
\begin{figure}[t]
    \centering
	\includegraphics[width=\linewidth]{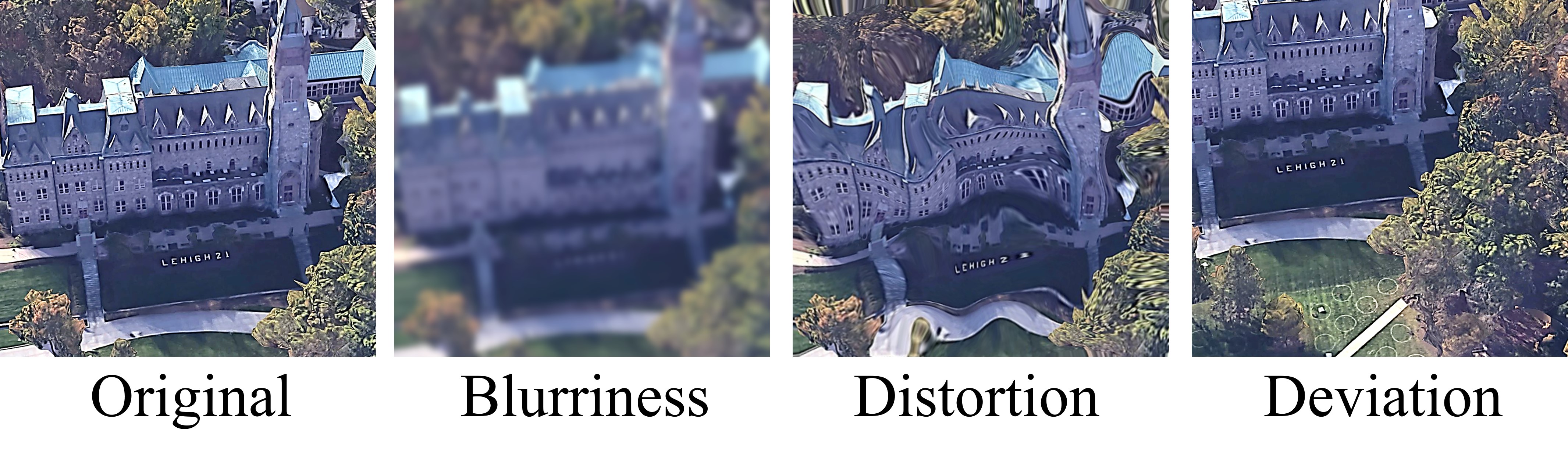}
	\caption{An input can be perturbed in various ways. This figures illustrate some possible cases for the map classification task.}
    \label{fig:origins_perturbation}
\end{figure}

There exist several ways by which data can be collected as a sequence from the environment; for example, an agent can navigate in an environment and take localized observations  \cite{liu2020distributed} or a camera with a fixed location can capture images from a time-varying scene \cite{huang1981image}. In most real-world applications, such raw observations are pre-processed, e.g., by neural networks, and the data sequence already carries relevant features of the observed raw data. Let us assume that this pre-processing can be modeled by a nonlinear map
\begin{equation*}
    x(t) = \Phi \left( O (t) \right),
\end{equation*}
where $O(t) \in \mathcal{O}$ denotes the raw observation sampled at time $t$, and $\mathcal{O}$ is the space of all observables. 

Perturbations can affect the data quality through several possible sources during the sampling process. The observer may deviate from its initially planned sampling routine due to dynamic noise in its motion planning \cite{du2011robot} and take (slightly) deviated samples from nearby scenes in the environment. The raw observations may lose quality due to the environmental noise \cite{boie1992analysis}, e.g., change in light intensity, cloudiness, and blurriness. The raw observations may also experience various types of deformation, e.g., camera rotation or distortion \cite{tang2017precision}. Fig. \ref{fig:origins_perturbation} depicts some  of these perturbations. The effect of uncertainty in all these cases can be modeled by 
\begin{equation} \label{eq:perturbation}
\tx(t) = \Phi \left( \tau \left(O (t) \right)+\xi(t) \right),
\end{equation}
where $\tx(t)$ represents the perturbed data, $\xi: \R \rightarrow \mathcal{O}$ is an additive bounded stochastic noise or a bounded deterministic disturbance, and $\tau: \mathcal{O} \rightarrow \mathcal{O}$ is a deformation map that models sample deformation due to sensor movements (e.g., translation and rotation, and scaling).

\section{Stable Recurrent Neural Networks and Their Error Estimates}  \label{sec:stable}

In order to analyze the robustness of the classification model, we will evaluate how the input perturbation is affecting the classification result. Let us first identify what input sequences can generate correct classifications. Recall that the model is trained with sequences from $\X'$ for all $k = 1,...,m$, and not every $x \in \X'$ will carry enough information to reveal the environment (e.g., a sequence consists of repeating scenes). Hence, only a subset of the sequences from the training set may generate the correct classification. We represent those data sequences that generate correct classifications by $\X^*$, where $\X^* \subseteq \X'$. This relation is depicted in Fig. \ref{fig:x_relation}.   

To measure how the input perturbation will affect the classification, let us consider a nominal sequence $x \in \X^*$ and its corresponding perturbed sequence $\tx \in \X$ obtained from \eqref{eq:perturbation}. The set $\X$ is the space of all possible sequences for the $k$'th class. In this section, we aim to evaluate the deviation generated by $x$ and $\tx$ in terms of belief vectors, i.e., $p(T;x)$ and $p(T;\tx)$. The first step is to consider the deviation generated at the output of the RNN, i.e., $h(t; x)$ and $h(t; \tx)$. 

In order to demonstrate our next result, let us introduce the concept of stable RNNs and its Lipschitz property. A stable RNN \cite{miller2018stable} provides the boundedness to its output when the input vectors are identical, i.e., $x(t) = \tx(t)$.

\begin{figure}
    \centering
	\includegraphics[width=0.4\linewidth]{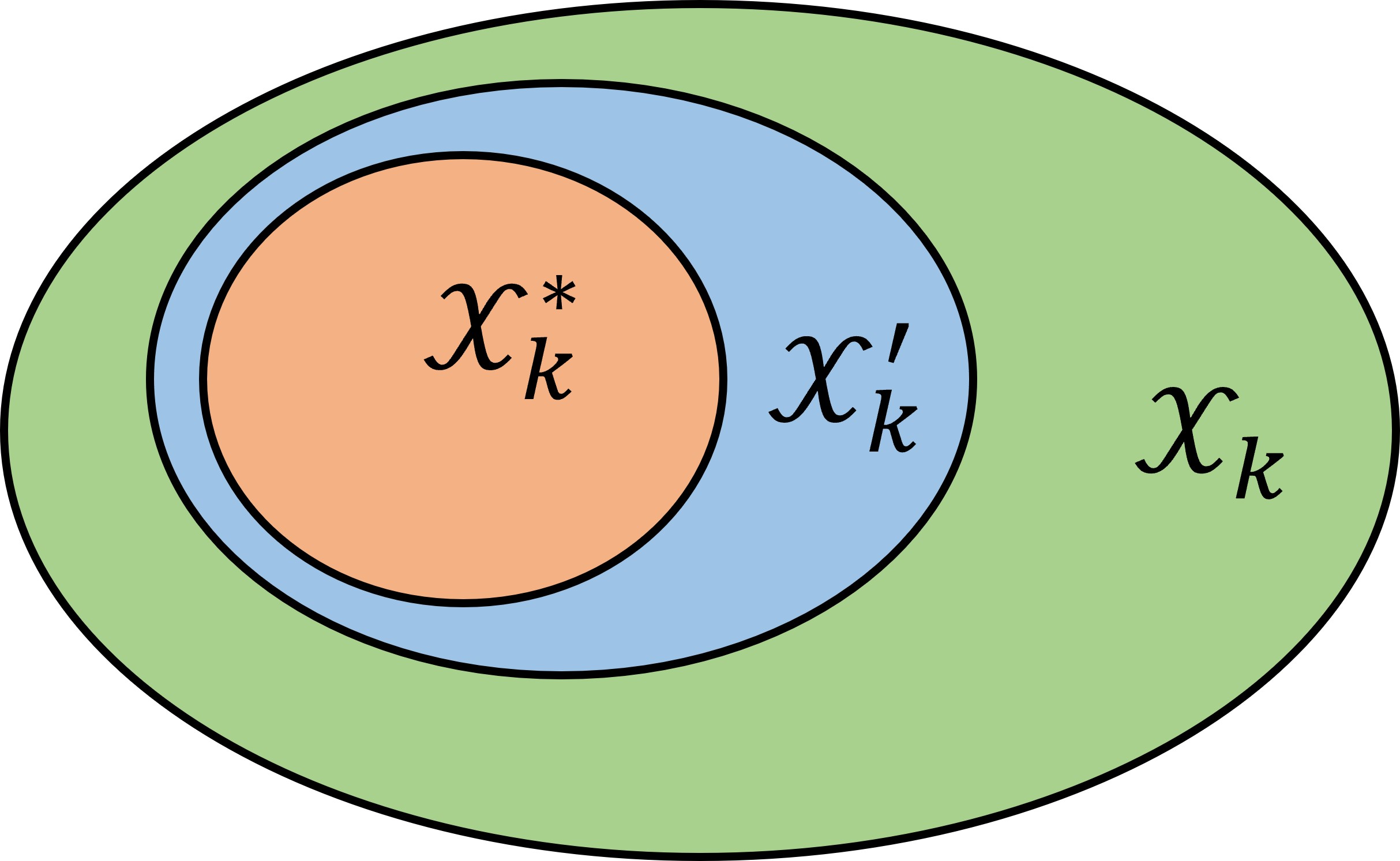}
	\caption{This figure depicts the relation of the space of all possible sequences $\X$, the training set $\X'$, and the nominal set $\X^*$, i.e., $\X^* \subseteq \X' \subseteq \X$.}
    \label{fig:x_relation}
\end{figure}

\begin{definition}
    A recurrent neural network model \eqref{eq:LSTM} is stable (contractive) if there exists a constant $\lambda \in (0,1)$ such that for any $h(t-1; x),  h(t-1; \tx) \in \R^{b}$,
    \begin{equation}    \label{eq:contractive}
    	\|h(t; x) - h(t; \tx)\| \leq \lambda \, \|h(t-1; x) - h(t-1; \tx)\|,
    \end{equation}
    where $h(t;x) = F \big( h(t-1; x), x(t) \big)$ and $h(t; \tx) = F \big( h(t-1; \tx), x(t) \big)$. 
\end{definition}

It is also known that the RNN model is Lipschitz continuous with respect to its input \cite{miller2018stable, ko2019popqorn}. Then, for every two sequences $x$ and $\tx$ with identical history state at $t-1$, i.e., $h(t-1;x) = h(t-1;\tx)$, one has
\begin{align}    \label{eq:lip} 
    \|h(t;x) - h(t;\tx)\| \leq \kappa ~ \|x(t) - \tx(t)\|,
\end{align}
where $\kappa \in \mathbb{R}_{+}$ is the real Lipschitz constant, $h(t;x) = F \big( h(t-1; x), x(t) \big)$, and $h(t; \tx) = F \big( h(t-1; x), \tx(t) \big)$. Then, the following result shows the upper bound for the deviation between the nominal belief $p(T;x)$ to the perturbed belief $p(T;\tx)$ given a stable RNN model.

\begin{theorem}     \label{thm:1}
    For the classification model with a stable RNN model, one has
    \begin{equation*}
        \|p(T;x) -p(T;\tx)\| \leq \eta \, \|x - \tx\|_{\ell_{\infty}}
    \end{equation*}
    for every two sequences $x \in \X^*$ and $\tx \in \X$, where 
    \begin{equation}    \label{eq:eta}
        \eta=\frac{ \kappa ~ \|W_c\|}{(1-\lambda) \sqrt{m}}.
    \end{equation}
\end{theorem}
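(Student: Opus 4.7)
The plan is to propagate the input perturbation through the three stages of the classifier --- (i) the recurrent dynamics up to time $T$, (ii) the affine layer $q = W_c h + b_c$, and (iii) the softmax --- and multiply the resulting bounds.

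The core computation is stage (i): I would set $e(t) := \|h(t;x) - h(t;\tx)\|$ and derive a scalar recursion for it. Introduce the auxiliary state $\hat{h}(t) := F(h(t-1;\tx), x(t))$, which shares its history with the perturbed trajectory but receives the nominal input. The triangle inequality then yields
\[
e(t) \leq \|h(t;x) - \hat{h}(t)\| + \|\hat{h}(t) - h(t;\tx)\|.
\]
The first term is controlled by the stability property \eqref{eq:contractive}, since both states share the input $x(t)$ and differ only in their histories, giving $\lambda\, e(t-1)$; the second is controlled by the Lipschitz property \eqref{eq:lip}, since both states share the history $h(t-1;\tx)$ and differ only in their inputs, giving $\kappa\, \|x(t) - \tx(t)\|$. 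Iterating $e(t) \leq \lambda\, e(t-1) + \kappa\, \|x(t) - \tx(t)\|$ from $t=0$ (with $e(0)=0$ since the RNN is initialized identically) and bounding each input difference by $\|x-\tx\|_{\ell_{\infty}}$ produces a geometric sum that telescopes into $e(T) \leq \tfrac{\kappa}{1-\lambda}\, \|x - \tx\|_{\ell_{\infty}}$.

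Stage (ii) is immediate from linearity: $q(T;x) - q(T;\tx) = W_c\,(h(T;x) - h(T;\tx))$, so the induced matrix norm gives $\|q(T;x) - q(T;\tx)\| \leq \|W_c\|\, e(T)$. The main obstacle is stage (iii), where the factor $1/\sqrt{m}$ in \eqref{eq:eta} must emerge from the softmax. Since the softmax is invariant under shifts along $\bm{1}$, the difference $p(T;x) - p(T;\tx)$ lies in $\bm{1}^{\perp}$, and the Jacobian $J(q) = \mathrm{diag}(p) - p p^{\top}$ annihilates $\bm{1}$. My plan is to use the integral representation
\[
p(T;x) - p(T;\tx) = \int_{0}^{1} J(q(s))\,\bigl(q(T;x)-q(T;\tx)\bigr)\, ds
\]
along the line segment $q(s) = (1-s)\, q(T;\tx) + s\, q(T;x)$, and then bound $\|J(q)\|$ on $\bm{1}^{\perp}$ using that the softmax output $p \in \mathcal{P}_m$ --- the constraint $\bm{1}^{\top} p = 1$ tightens the spectrum of $J$ and, together with a norm-comparison step exploiting $\|p\|_{2} \geq 1/\sqrt{m}$ for probability vectors, is what I expect to yield the $1/\sqrt{m}$ factor. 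Multiplying the three stage bounds then produces the inequality with $\eta$ as in \eqref{eq:eta}.
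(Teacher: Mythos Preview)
Your stages (i) and (ii) are exactly the paper's argument. The only cosmetic difference is the choice of splitting point: the paper inserts $F\bigl(h(t-1;x),\tx(t)\bigr)$ (nominal history, perturbed input), whereas you insert the mirror state $F\bigl(h(t-1;\tx),x(t)\bigr)$; both lead to the same recursion $e(t)\le \lambda\,e(t-1)+\kappa\,\|x(t)-\tx(t)\|$ and the same geometric-series bound $e(T)\le \tfrac{\kappa}{1-\lambda}\|x-\tx\|_{\ell_\infty}$.

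The deviation is in stage (iii). The paper does \emph{not} derive the $1/\sqrt{m}$ factor: it simply invokes the fact (with a citation) that $\texttt{Softmax}:\R^m\to\mathcal{P}_m$ is $1/\sqrt{m}$-Lipschitz, and multiplies. Your plan to extract $1/\sqrt{m}$ from a uniform bound on the Jacobian $J(q)=\mathrm{diag}(p)-pp^{\top}$ is the weak point of the proposal. That matrix has operator $2$-norm at most $1/2$, and this value is approached for \emph{every} $m$ whenever $p$ concentrates on two coordinates (e.g.\ $p\approx(\tfrac12,\tfrac12,0,\dots,0)$, which is reachable by softmax). Hence $\sup_{q}\|J(q)\|_2$ does not decay like $1/\sqrt{m}$, and the integral-of-Jacobian route, as sketched, cannot by itself produce the constant in \eqref{eq:eta}. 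The phrase ``a norm-comparison step exploiting $\|p\|_2\ge 1/\sqrt{m}$'' is not yet an argument; note that this inequality goes the wrong way to \emph{shrink} a bound. If you want to match the paper, simply invoke the cited $1/\sqrt{m}$-Lipschitz property of $\texttt{Softmax}$ as a black box; if you want to prove it, you will need something beyond a pointwise spectral bound on $J$ (for instance, a different norm pairing between the domain and range, or a global argument that does not reduce to $\sup_q\|J(q)\|$).
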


\begin{proof}
    The initial hidden states are set to $h(0;x) = 0$ and $h(0;\tx) = 0$ in both training and testing stage. Considering the Lipschitz continuity \eqref{eq:lip}, the history states generated by sequences $x$ and $\tx$ at $t = 1$ follows
    \begin{equation} \label{eq:lip-1}
        \begin{aligned} 
            \|h(1;x) - h(1;\tx)\| \leq \kappa \|x(1) - \tx(1)\|.
        \end{aligned}
    \end{equation}
    For the next time step $t = 2$, we can obtain \eqref{eq:h2} by using the triangle inequality
    \begin{multline}\label{eq:h2}
        \|h(2;x) -  h(2;\tx)\| = \|h(2;x) - F \big(h(1;x),\tx(2)\big) \\+ F \big(h(1;x),\tx(2)\big) - h(2;\tx)\|\\
    	\leq \|F \big(h(1;x),x(2) \big) - F \big(h(1;x),\tx(2) \big)\|\\
    	 + \|F \big(h(1;x),\tx(2) \big) - F \big(h(1;\tx),\tx(2) \big)\|.
    \end{multline} 
    The first half of \eqref{eq:h2} can be bounded using \eqref{eq:lip},
    \begin{align*}
        \|F \big(h(1;x),x(2) \big) -F \big( h(1;x),\tx(2) \big)\| &\leq \kappa ~ \|x(2) - \tx(2)\|\\
        & \leq \kappa ~ \|x - \tx\|_{\ell_\infty}.
    \end{align*}
    The second half of \eqref{eq:h2} can be bounded using \eqref{eq:contractive} and then \eqref{eq:lip-1},
    \begin{align*}
        &\|F (h(1;x),\tx(2)) -F (h(1;\tx),\tx(2))\|\\ 
        &\hspace{2cm}\leq \lambda~\|h(1;x) - h(1;\tx)\|\\
        &\hspace{2cm}\leq \lambda \kappa ~\|x(1) - \tx(1)\| \leq \lambda \kappa ~\|x - \tx\|_{\ell_\infty}.
    \end{align*}
    Summarizing the above inequalities, the deviation of history states at $t = 2$ obtains the following upper bound,
    \begin{align*}
        \|h(2;x) - h(2;\tx)\|  \leq (1+\lambda) \, \kappa \, \|x - \tx\|_{\ell_\infty}.
    \end{align*}
    Repeating the above steps up to $t$, the deviation of $\|h(t;x) - h(t;\tx)\|$ is upper bounded by
    \begin{align*}
    	\|h(t;x) - h(t;\tx)\| &\leq (1 +\lambda + ... + \lambda^{t-1} )\kappa ~ \|x-\tx\|_{\ell_{\infty}}\\
    	&\leq \frac{1-\lambda^{t}}{1-\lambda} \kappa ~ \|x-\tx\|_{\ell_{\infty}}.
    \end{align*}
    Given that $0< \lambda <1$, the terminal deviation at $T$ obtains the following boundedness,
    \begin{align}   \label{eq:h-diff} 
        \|h(T;x) - h(T;\tx)\| \leq \frac{\kappa}{1-\lambda} ~ \|x-\tx\|_{\ell_{\infty}}.
    \end{align}
    The similar bounds can be obtained for the fully connected layer \eqref{eq:q},
    \begin{align*}
        \|q(T;x) - q(T;\tx) \| \leq \|W_c\| \frac{\kappa}{1-\lambda} ~ \|x-\tx\|_{\ell_{\infty}}.
    \end{align*}
    Then, the results follows immediately by considering the fact that the $\texttt{Softmax}$ function is $1/\sqrt{m}$ Lipshcitz continuous \cite{2021011} with respect to its input $q \in \R^m$.
\end{proof}

The above theorem asserts that, in the classifier, if the RNN model is stable, the terminal belief difference is bounded by the maximum deviation along the input sequences. This result provides the knowledge of the classifier's robustness by identifying an upper bound for the error generated by the perturbed and the nominal data. 

\section{Classification Criterion and Convergence Conditions} \label{sec:convergence}

\begin{figure}[t]
    \centering
	\includegraphics[width=\linewidth]{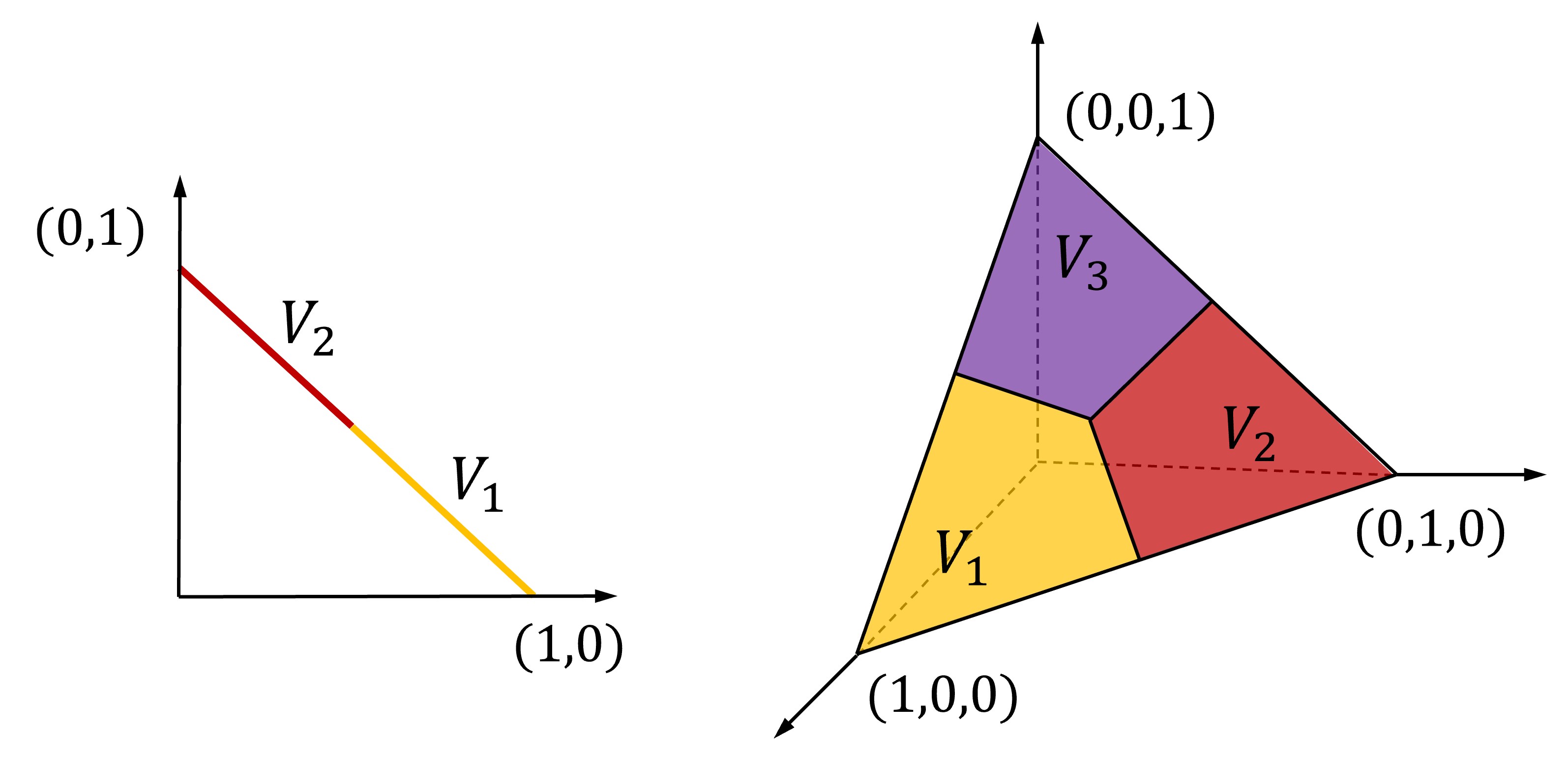}
	\caption{This figure depicts the Voronoi partition of the classification sets $\mathcal{P}_m$ for both $m=2$ (left) and $m=3$ (right).}
    \label{fig:voronoi}
\end{figure}

In this part, we aim to represent the $\arg\max$ classification criterion in a quantifiable manner and use the previously obtained deviation bounds in belief vectors to perform the robustness analysis.

The classification process with $x$ is accomplished by identifying $\arg \max p(T;x)$ as the class label, i.e., it will conclude as the $k$'th class if and only if $p(T;x)_k > p(T;x)_j$ for all $j = 1, \dots, m$ and $j \neq k$. This criterion can be explicitly represented via the Voronoi partitioning \cite{klein1988abstract}. Let us denote by $V_1, V_2, \dots, V_m$ the Voronoi partition of the probability vector space $\mathcal{P}_m$, i.e.,
\begin{equation}    \label{eq:v_k}
    V_k = \left\{p \in \mathcal{P}_m ~|~ p_k > p_j, ~  \forall j \neq k \right\}. 
\end{equation}
Some examples of the Voronoi partitioning is shown in Fig. \ref{fig:voronoi}. The above partition is equivalent to the $\arg\max$ classification criterion, i.e., $\arg \max p(T;x) = p(T;x)_k$ if and only if $p(T;x) \in V_k$.

The next step is to convert the deviation $\|p(T;x) -p(T;\tx)\|$ into a Boolean classification result, i.e., ``true" or ``false". To reveal our next result, let us introduce the distance function between vectors and sets.

\begin{definition}
    The distance between two vectors $p , p' \in \mathcal{P}_m$ is defined as
    \begin{equation*}
        d(p, p') = \|p - p'\|
    \end{equation*}
    and the distance between a vector and a set is defined as
    \begin{equation*}
        d(p, V_j) = \inf_{p' \in V_j} d(p, p').
    \end{equation*}
\end{definition}

Let us express the $\arg \max$ classification criterion equivalently using the distance function.

\begin{lemma}       \label{lem:criterion}
    A belief vector $p \in \mathcal{P}_m$ will be classified as the $k$'th class if and only if
    \begin{equation}    \label{eq:criterion1}
        d(p, V_k) = 0,
    \end{equation}
    or if and only if 
    \begin{equation}    \label{eq:criterion2}
        d(p, V_j) > 0 \text{ for all } j \neq k.
    \end{equation}
\end{lemma}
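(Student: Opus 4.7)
The plan is to reduce both equivalences to the single set-theoretic statement $p \in V_k$, which by the definition \eqref{eq:v_k} is exactly the $\arg\max$ condition $p_k > p_j$ for all $j \neq k$. The key topological facts I will use are that $V_k$ is an open subset of $\mathcal{P}_m$ whose closure is $\overline{V_k} = \{p \in \mathcal{P}_m : p_k \geq p_j, ~\forall j\}$, and that $d(p, A) = 0$ if and only if $p \in \overline{A}$.

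For the first equivalence \eqref{eq:criterion1}, the forward direction is immediate: $p \in V_k$ gives $d(p, V_k) \leq \|p - p\| = 0$. For the converse, $d(p, V_k) = 0$ places $p$ in $\overline{V_k}$, so $p_k \geq p_j$ for every $j$; since the classifier's $\arg\max$ rule is only meaningful when the maximum is uniquely attained, the strict inequality follows and $p \in V_k$.

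For \eqref{eq:criterion2}, I would argue both directions separately. Suppose $p \in V_k$ and fix any $j \neq k$; for every candidate $q \in V_j$ one has $q_j > q_k$, and combined with $p_k > p_j$ this yields $(q_j - p_j) + (p_k - q_k) > p_k - p_j > 0$. This forces $\|p - q\|$ to be bounded below by a strictly positive multiple of the gap $p_k - p_j$, so the infimum $d(p, V_j)$ is strictly positive. Conversely, if $d(p, V_j) > 0$ for every $j \neq k$, then $p \notin \overline{V_j}$ for any such $j$, which means no coordinate $p_j$ with $j \neq k$ attains the maximum of $p$; the maximum must therefore be achieved uniquely at index $k$, i.e., $p \in V_k$.

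The main obstacle is the forward half of \eqref{eq:criterion2}: since $V_j$ is open, a naive bound on $\|p - q\|$ could in principle be made arbitrarily small by pushing $q$ toward the boundary of $V_j$. The trick of adding the two strict inequalities across coordinates $k$ and $j$ converts their aggregate gap into an unavoidable, strictly positive contribution to $\|p - q\|$, which delivers the needed lower bound without appeal to compactness or sharp constants.
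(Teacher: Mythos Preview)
Your proposal is correct and in fact more careful than the paper's own argument. The overall skeleton is the same---both proofs reduce everything to ``$p \in V_k$'' and dispose of the Voronoi boundaries by an extrinsic assumption (the paper via a footnote, you via the ``$\arg\max$ only meaningful when unique'' remark)---but the two implementations differ in two places. For the converse of \eqref{eq:criterion1}, the paper writes ``suppose there exists $p' \in V_k$ with $d(p,p')=0$,'' which tacitly assumes the infimum over the open set $V_k$ is attained; you avoid this by passing to the closure $\overline{V_k}=\{p:p_k\geq p_j\}$ and then invoking the tie-breaking assumption, which is the rigorous route. For the forward direction of \eqref{eq:criterion2}, the paper simply cites disjointness of $V_k$ and $V_j$, which by itself does not force $d(p,V_j)>0$ for $p\in V_k$ (two disjoint open sets can have intersecting closures); your coordinate-gap trick $(q_j-p_j)+(p_k-q_k)>p_k-p_j$ supplies an explicit, uniform lower bound on $\|p-q\|$ over all $q\in V_j$, which is exactly what is needed. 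The payoff of your approach is a self-contained quantitative bound $d(p,V_j)\gtrsim p_k-p_j$; the payoff of the paper's approach is brevity, at the cost of leaving those two points to the reader.
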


\begin{proof}
    Suppose there exists a $p' \in V_k$ such that $d(p,p') = 0$, then one has $p = p'$ and $p \in V_k$. On the other hand, if $p \in V_k$, then we have $d(p, V_k) \leq d(p, p) = 0$ and $d(p, V_k)  = 0$ by definition.
        
    For the equivalence, since $V_k \bigcap V_j = \emptyset$ for all $j \neq k$, one has if $d(p, V_k) = 0$, then $d(p, V_j) > 0$ for any $V_j$. On the other hand, consider the fact that 
    $$
    V_k \subset \mathcal{P}_m \setminus \bigcup_{j \neq k} V_j,
    $$ 
    one has if $d(p, V_j) > 0$, then $p \notin V_j$ and $p \in V_k$ \footnote{The cases of $p$ located on the boundary of the Voronoi partition is considered trivial since its probability of happening is $0$.}.
\end{proof}

The above result introduces a quantifiable classification criterion, which can be used in all classification problems using the $\arg \max$ classifier. Given a belief vector $p \in \mathcal{P}_m$, one can obtain the classification result by checking if \eqref{eq:criterion1} or \eqref{eq:criterion2} are satisfied.

Using Lemma \ref{lem:criterion}, we can establosh a  connection between $\|p(T;x) -p(T;\tx)\|$ and the robustness of the classification model. For a robust classification model, we expect $p(T;\tx)$ to be located within $V_k$. This relation can be validated by comparing $d(p(T;x), V_j)$ with $\|p(T;x) -p(T;\tx)\|$ for all $j = 1,...,m$ and $j \neq k$. In order to accomplish the analysis for an arbitrary perturbed sequence $\tx$, let us introduce the concept of the robustness radius.

\begin{definition}
    For all nominal sequences $x \in \X^*$, let us consider the {\it robustness radius} for the $k'$th class label as 
    \begin{equation}    \label{eq:robust-radius}
        \varepsilon_k = \min_{x \in \X^*, \, j \neq k} d \big( \, p(T;x), V_j \, \big),
    \end{equation}
    where $j = 1, \dots, m$. 
\end{definition}

The robust radius quantifies the minimal distance from a nominal belief vector $p(T;x)$ to the boundary of $V_k$, which enables us determine when the classification with $\tx \in \X$ is robust, i.e., the result is the same with the one generated by some $x \in \X^*$.

\begin{theorem}     \label{thm:2}
    Suppose that the RNN model \eqref{eq:LSTM} is stable. The classification generated by a perturbed sequence $\tx \in \X$ is robust for the $k$'th class, if there exist some $x \in \X^*$ such that
    \begin{equation}    \label{eq:thm2}
        \|x - \tx \|_{\ell_{\infty}}  < \varepsilon_k \, \eta^{-1},
    \end{equation}
    where $\|x - \tx \|_{\ell_{\infty}}$ and $\eta$ are defined in \eqref{eq:eta} and \eqref{eq:ell_infty}.
\end{theorem}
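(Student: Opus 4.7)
The plan is to chain together the three ingredients already in hand: Theorem \ref{thm:1} bounds the deviation of belief vectors by $\eta \|x-\tx\|_{\ell_\infty}$, the robustness radius $\varepsilon_k$ from \eqref{eq:robust-radius} bounds how close a nominal belief can get to any wrong Voronoi cell, and Lemma \ref{lem:criterion} converts positive distance to cells $V_j$ with $j\neq k$ into the correct classification decision. So the proof is essentially a triangle-inequality argument in belief space.

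First I would fix any $x\in\X^*$ satisfying the hypothesis $\|x-\tx\|_{\ell_\infty}<\varepsilon_k\,\eta^{-1}$. Applying Theorem \ref{thm:1} directly yields
\begin{equation*}
    \|p(T;x) - p(T;\tx)\| \;\leq\; \eta\,\|x-\tx\|_{\ell_\infty} \;<\; \varepsilon_k.
\end{equation*}
Next, I would exploit the fact that distance to a set is $1$-Lipschitz in its first argument, i.e.\ for any fixed set $V_j$,
\begin{equation*}
    d(p(T;\tx),V_j) \;\geq\; d(p(T;x),V_j) - \|p(T;x)-p(T;\tx)\|.
\end{equation*}
This is a short calculation from the definition of $d(\cdot,V_j)$ as an infimum of $\|\cdot - p'\|$ over $p'\in V_j$ combined with the vector triangle inequality.

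Then I would combine these two estimates: by the definition \eqref{eq:robust-radius} of the robustness radius, $d(p(T;x),V_j)\geq \varepsilon_k$ for every $j\neq k$, so
\begin{equation*}
    d(p(T;\tx),V_j) \;\geq\; \varepsilon_k - \|p(T;x)-p(T;\tx)\| \;>\; 0
\end{equation*}
for all $j\neq k$. Lemma \ref{lem:criterion} (criterion \eqref{eq:criterion2}) then immediately identifies $p(T;\tx)$ as belonging to class $k$, which is exactly the robustness statement since $x\in\X^*$ was classified as class $k$ by definition of $\X^*$.

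The only conceptually non-trivial step is the $1$-Lipschitz property of the set-distance function, which is standard but should probably be stated explicitly; everything else is a direct substitution of the earlier bounds. I do not foresee any real obstacle, since the theorem is deliberately designed so that the constant $\eta$ in Theorem \ref{thm:1} exactly cancels the $\eta^{-1}$ in the hypothesis, making the inequality $\|p(T;x)-p(T;\tx)\|<\varepsilon_k$ immediate.
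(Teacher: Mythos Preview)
Your proposal is correct and follows essentially the same approach as the paper's proof: both use the triangle inequality (equivalently, the $1$-Lipschitz property of $p\mapsto d(p,V_j)$) to bound $d(p(T;\tx),V_j)$ from below by $d(p(T;x),V_j)-\|p(T;x)-p(T;\tx)\|$, invoke the definition of $\varepsilon_k$ and Theorem~\ref{thm:1}, and conclude via Lemma~\ref{lem:criterion}. The only difference is cosmetic ordering---you apply Theorem~\ref{thm:1} first and the triangle inequality second, whereas the paper does the reverse.
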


\begin{proof}
    In the view of the $k$'th class, the classification with $\tx \in X$ is robust if 
    \begin{equation} \label{eq:tx_dist}
        d(\,p(T;\tx), V_j\,) = \inf_{p' \in V_j} d(p(T;\tx), p') > 0,
    \end{equation}
    for all $j = 1,\dots,m$ and $j \neq k$. The above quantity obtains a lower bound using the triangle inequality,
    \begin{align*}
        \inf_{p' \in V_j}d(p(T;\tx), p') &\geq  \\
        \inf_{p' \in V_j} & d(p(T;x), p') - d(p(T;x), p(T;\tx)),
    \end{align*}
    for any $\tx \in \X$ and $x \in \X^*$. Then, the inequality \eqref{eq:tx_dist} will be satisfied if 
    \begin{align*}
        \inf_{p' \in V_j} d(\,p(T;x), p'\,) - d(\,p(T;x), p(T;\tx)\,)> 0,
    \end{align*}
    which is equivalent to 
    $  
        \|p(T;x) - p(T;\tx)\| < \varepsilon_k.
    $
    Then, we can conclude by applying Theorem \ref{thm:1} to the inequality above.
\end{proof}

The above theorem states that in the classification model with stable RNNs, a perturbed data sequence $\tx$ will generate a correct classification result if there exist some nominal sequences $x \in \X^*$ that satisfy \eqref{eq:thm2}. In other words, the constant $\varepsilon_k \eta^{-1}$ measures maximal allowed deviation for the input to perform robust classification on the $k$'th class, i.e., for any perturbed sequence $\tx$ that satisfies \eqref{eq:thm2}, the classification result is guaranteed to be correct. In Fig. \ref{fig:robust_radius}, we present the idea of this relation with a simplified example.
We highlight that the above result provides a sufficient condition, so $\tx$ may still generate a correct classification if \eqref{eq:thm2} is not satisfied.

\begin{figure}[t]
    \centering
	\includegraphics[width=\linewidth]{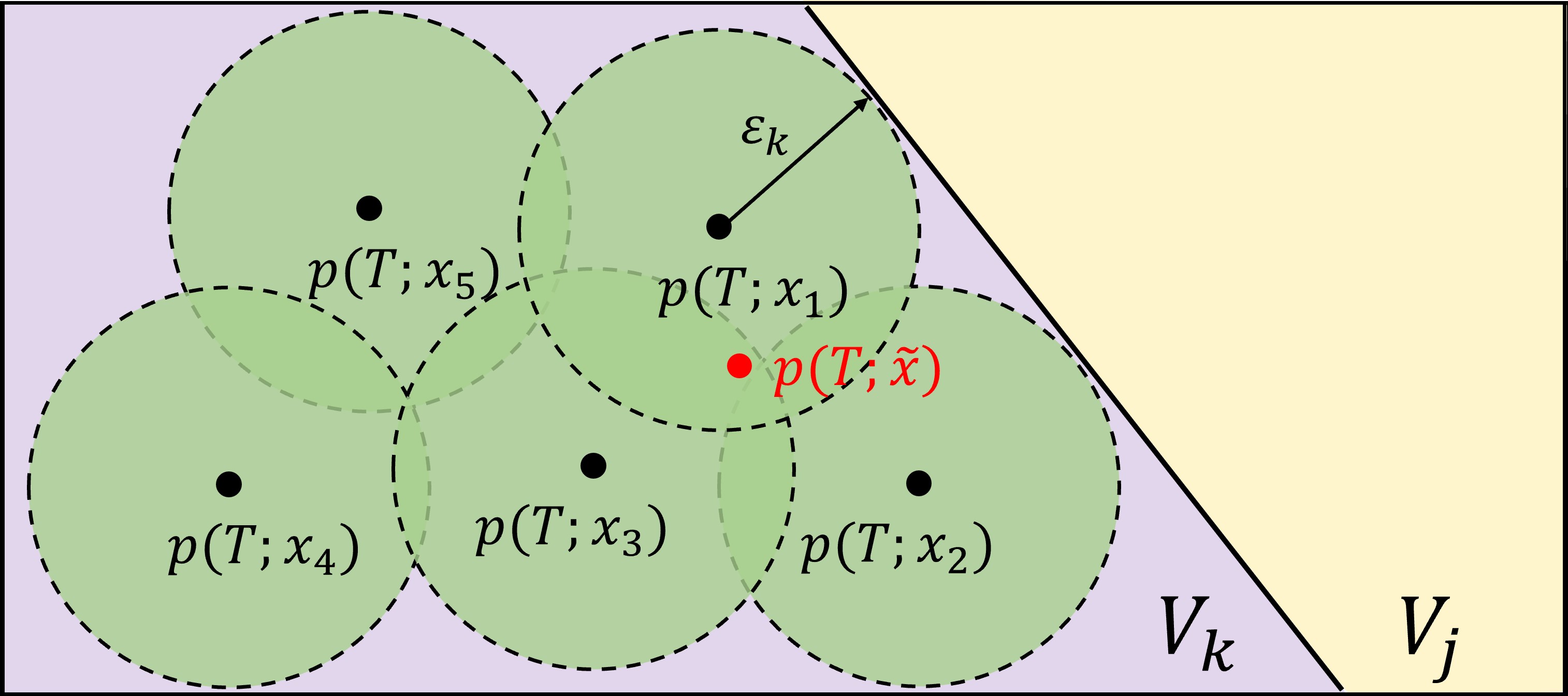}
	\caption{This figure depicts nominal belief vectors in the $k$'th class $p(T;x_1),...,p(T;x_5)$, and the perturbed belief vector $p(T;\tx)$. The model will perform a robust classification if $p(T;\tx)$ is within the distance of $\varepsilon_k$ to some $p(T;x_i)$, where $x_i \in \X^*$.}
    \label{fig:robust_radius}
\end{figure}


\section{Case Study and Simulations}      \label{sec:case-study}
We use the map and image classification \cite{liu2020distributed} as examples to demonstrate and validate our theoretical results. In the case study, it is assumed that an aerial robot with a downward-facing camera aims to classify the underlying image (or map). However, due to the limited sensing capability, the robot can only observe a localized portion of the image at each time step, and it is allowed to traverse the environment to collect partial images as a time-indexed sequence. 

\subsection{Training for Classification} 
We consider the sampling routine for the robot is given and fixed for both training and testing stage \footnote{For each map (or image), we generate five unique paths for a robot to traverse, and they are fixed through the training and testing. However, robots are also capable of planning the path based on their observation, see \cite{liu2021classification, mousavi2019layered}}. Data sequences are generated with a fixed length $T$. In the case study, a VGG-19 \cite{simonyan2014very} model is adopted to process the raw observations $O(t) \in \mathcal{O}$, such that 
$
    x(t) = VGG \big( O(t) \big).
$
The robot also uses a stable LSTM cell \cite{lstm}, which is a special case of RNN, to recurrently process the input. A stable (contractive) LSTM can be learned by introducing the following constraints in the training stage \footnote{In our notation, $\|f\|_{\infty} = \sup_t \|f_{ t \geq 0}\|_{\infty}$ and  $f_t$ is the output of the forget gate of the LSTM cell. We refer to \cite{miller2018stable} for the details.} 
\begin{equation}    \label{eq:constraint}
    \max \left\{\|W_u\|_{\infty}, \|W_o\|_{\infty}, 4 \|W_z\|_{\infty}, \sqrt{\|W_f\|_{\infty}}  \right\}  <  1 - \|f\|_{\infty}.
\end{equation}
When the above constraint is not satisfied during the training, each row of the LSTM weight matrices will be divided by a constant scaling factor $\tau > 1$ after each gradient step until \eqref{eq:constraint} is satisfied.

In the training stage, we evaluate the classification reward with the data sequence $x \in \X'$ by a log-sum-exp (LSE) loss as
$
    r = -LSE \big( e_k, p(T;x) \big),
$
in which $k$ denotes the ground truth label.

\subsection{MNIST Dataset}
\begin{figure}[t]
    \centering
	\includegraphics[width=\linewidth]{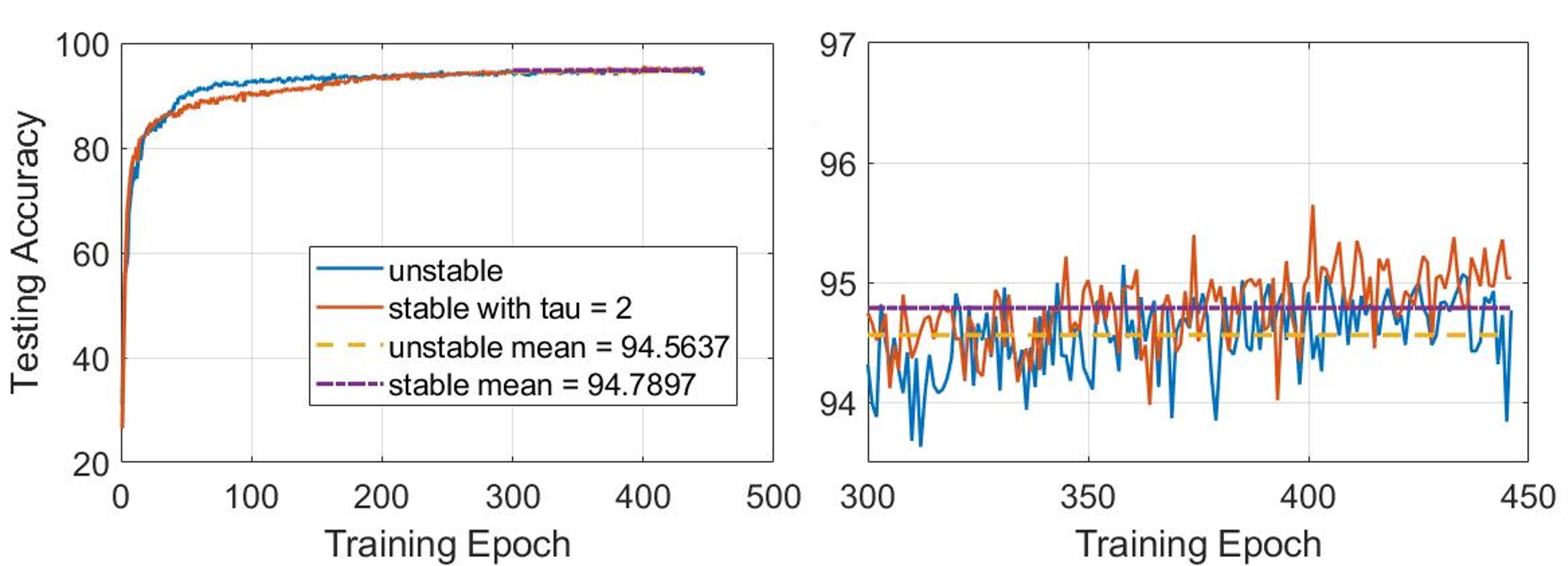}
	\caption{This figure depicts the training progress with and without the stability constraint \eqref{eq:constraint} on the MNIST dataset.}
    \label{fig:loss}
\end{figure}

In the first case study, we consider the robot is traveling over the image from the MNIST dataset \cite{lecun1998gradient, mousavi2019multi}. The training and testing is performed in PyTorch \cite{paszke2017automatic} with ADAM \cite{kingma2014adam} and a learning rate $l_r = 0.0001$. The testing result is presented in Table \ref{table:accuracy}, which is validated over five random seeds.

To establish a benchmark, we train an independent VGG-19 model with the entire image (map) as the input, shown in the last column of Table \ref{table:accuracy}. In the second column, the observation size denotes the maximum possible coverage of the entire image or map. It is shown that a single robot can classify by only revealing a small portion of the environment. It should be emphasized that the performance on the MNIST will reach $98\%$ by using multiple communicating robots \cite{mousavi2019multi}. 

\begin{table*}[t]                           
	\centering	
	\resizebox{\linewidth}{!}{ 
		\begin{tabular}{ l c c c c c}
			\toprule
			Dataset &	Observation Size ($\%$)	&	Stability Constraint & Scaling Factor $\tau$ & Classification Accuracy ($\%$) &  Benchmark (VGG-19) \\
			\midrule
			\multirow{2}{*}{Map Dataset with clouds}        & \multirow{2}{*}{$\leq$7.8}  & No & -  & 77.62 & \multirow{2}{*}{99.43}\\   
		    &   & Yes & 1.01 & 77.23 &           \\ [0.5ex]
			\hdashline\noalign{\vskip 0.5ex}
			\multirow{6}{*}{MNIST}      & \multirow{6}{*}{$\leq$ 68.88} & No & - & 94.56 & \multirow{6}{*}{99.33}          \\   
		    &  & Yes & 1.05 & 94.34 &           \\ 
		    &  & Yes & 1.1 & 94.90 &           \\
		    &  & Yes & 2 & 95.05 &           \\
		    &  & Yes & 4 & 94.20 &         \\
		    &  & Yes & 8 & 94.04 &         \\
			\bottomrule
	\end{tabular} }                        
	\caption{The performance measured with different observation size, stability condition and dataset. }
	\label{table:accuracy}  
\end{table*}

\subsection{Robustness Analysis on the MNIST Dataset}
To reveal how a stable RNN model will affect the performance of the classifier, we test both unstable and stable models with various scaling factors $\tau$ in the MNIST dataset. As shown in both Fig. \ref{fig:loss} and Table \ref{table:accuracy}, a stable model has a comparable performance and sometimes it even outperforms the unstable model for certain values of $\tau$.

\subsubsection{Constant deviation}
To investigate the robustness of the classifier, we first obtain the nominal sequences\footnote{A sequence is called nominal if its resulting classification accuracy is  $100\%$.} from the training set $\X'$ and measure the performance of the model by adding a constant deviation $\xi \in [0,10]$ to the nominal sequences to get
$$
    \tx(t) = x(t) + \xi \bm{1}_a,
$$
where $\bm{1}_a$ is the vector of all ones. This implies that  $\|x - \tx\|_{\ell_{\infty}} =  \xi$. The test is performed on the $5$'th class of the MNIST dataset for the unstable and stable models with $\tau = 1.05$. The result is shown in Fig. \ref{fig:deviation}, and it implies that, for the stable model, if the deviation of the input $\|x - \tx\|_{\ell_{\infty}}$ is less than $\varepsilon_{5} / \eta = 0.0886$, the classification with $\tx$ is guaranteed to be robust, i.e., with a $100\%$ accuracy. This agrees with our theoretical result. However, the unstable model starts to generate wrong predictions before the quantity $\|x - \tx\|_{\ell_{\infty}}$ reaches $0.0886$. 

We highlight that for the stable model the accuracy remains $100\%$ even for some $\|x - \tx\|_{\ell_{\infty}} > \varepsilon_{5} / \eta$. This is because our theoretical result provides a sufficient condition, which is usually conservative. As we observe from the simulations, there is still a chance to get robust classification when \eqref{eq:thm2} is not satisfied. Furthermore, it is  interesting to notice that the unstable model starts to outperform the stable model when $\|x - \tx\|_{\ell_{\infty}} \geq 0.2$. The reason is that  by imposing the stability constraint \eqref{eq:constraint} one only require the stable model to {\it confidently} exhibit the robust classification with the input perturbation less than $\varepsilon_{k} / \eta$,  instead of concerning the performance with $\|x - \tx\|_{\ell_{\infty}} \geq \varepsilon_{k} / \eta$. The stable model provides the confidence of $100\%$ accuracy with all deviations $\|x - \tx\|_{\ell_{\infty}} < 0.0886$, which is not guaranteed for the unstable models, as shown in the shaded area in Fig. \ref{fig:deviation}. This difference is crucial when the robot is performing high-precision tasks, in which any level of mistake is not acceptable. On the other hand, the stable model that focuses on improving the performance with the Gaussian input noise has been proposed and validated in our previous work \cite{amini2021robust}.  

\subsubsection{Variable deviation}
It is also interesting to see how the model will perform with the variable deviations instead of the constant ones. The differences from the testing dataset to the training dataset can be naturally considered variable deviations since the variation of handwritten digits from the training to the testing set can not be modeled as the constant deviation. Hence, we compare the performances of different models on both training and testing sets, see Fig. \ref{fig:perturb} and Table \ref{table:robust}, in which we denote the training set as ``unperturbed" and the testing set as ``perturbed." 

The results of variable deviation show that the performance loss from unperturbed data to the perturbed data is significantly reduced by using a stable model. In addition, the stable model sometimes outperforms the unstable model if a proper scaling factor is selected, e.g., $\tau = 1.1$ and $\tau = 2$. We also highlight that there exists an potential trade-off between the performance loss and the overall performance: As shown in Table \ref{table:robust}, a higher scaling factor $\tau$ usually implies a more stable RNN and less performance loss, but also a weaker overall performance since the stability constraint will potentially drive the model away from the optimal classifier.

\begin{figure*}[t]
    \begin{subfigure}[t]{.32\linewidth}
        \centering
    	\includegraphics[width=\linewidth]{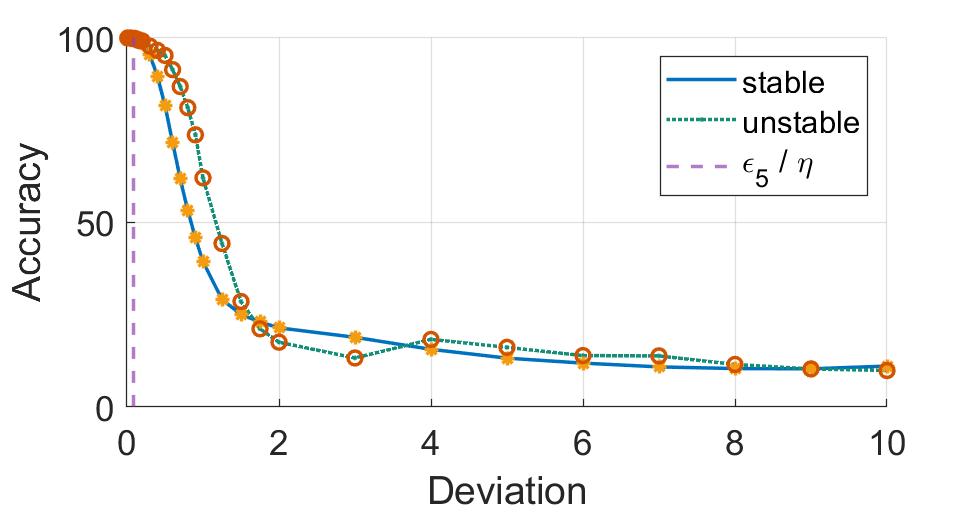}
    \end{subfigure}
    \hfill
    \begin{subfigure}[t]{.32\linewidth}
        \centering
    	\includegraphics[width=\linewidth]{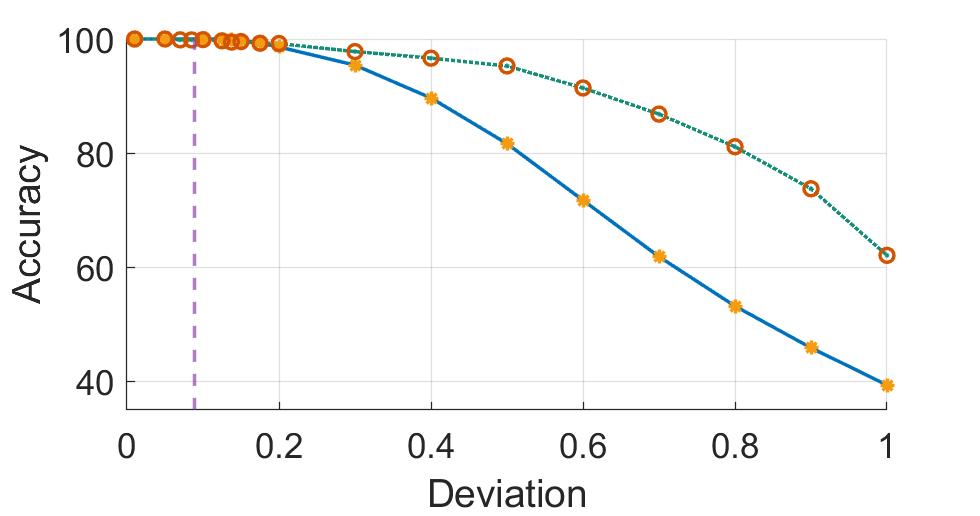}
    \end{subfigure}
    \hfill
    \begin{subfigure}[t]{.32\linewidth}
        \centering
    	\includegraphics[width=\linewidth]{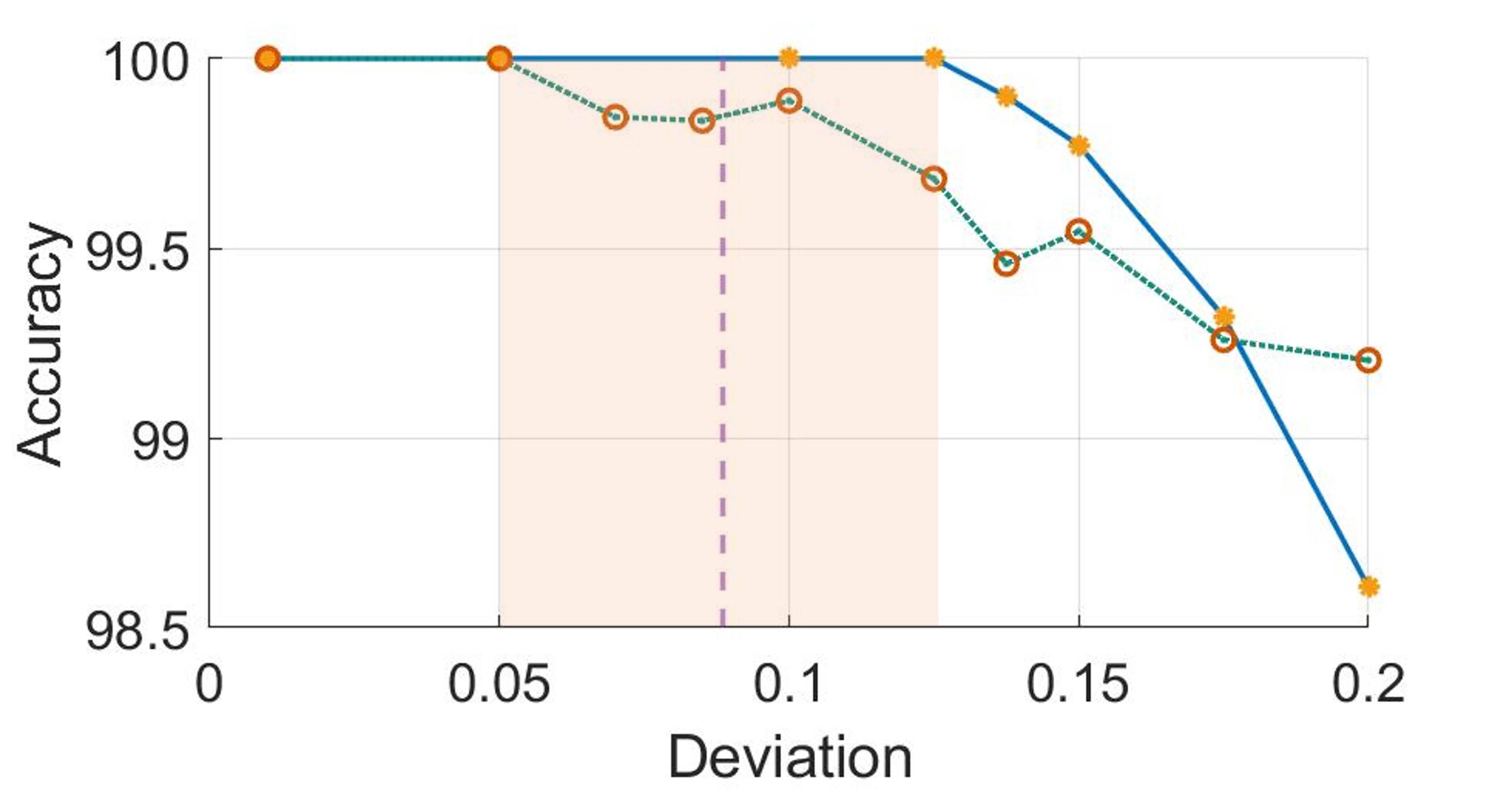}
    \end{subfigure}
	\caption{This figure depicts the classification accuracy when the constant perturbation has been added to the nominal sequences in the $5$'th class in the MNIST dataset with $\varepsilon_{5} / \eta = 0.0886$. From left to right are magnified details. The accuracy eventually converges to $10\%$ since the total number of labels is $m = 10$, and a random guess has $1/10$ of the chance to get the correct label.}
    \label{fig:deviation}
\end{figure*}

\begin{table}                           
	\centering	
	\resizebox{\linewidth}{!}
	{ 
		\begin{tabular}{c c c c c c}
			\toprule
			Stability  & \multirow{2}{*}{$\tau$} & Unperturbed  & Perturbed & Performance  \\
			Constraint & &  Sequence &  Sequence & Loss\\
			\midrule
			No       & - & 95.88 & 94.56 & \textbf{1.3163}\\ [0.5ex]
			\hdashline\noalign{\vskip 0.5ex}
			Yes      & 1.05 & 94.68 & 94.34 & 0.3396   \\
			Yes      & 1.1 & 95.24 & \textbf{94.90} & 0.3416   \\
			Yes      & 2 & 95.50 & \textbf{95.05} & 0.4467   \\
			Yes      & 4 & 94.40 & 94.20 & 0.1978   \\
			Yes      & 8 & 94.14 & 94.04 & 0.1062   \\
			\bottomrule
	\end{tabular} }                        
	\caption{The performance loss under perturbation with various scaling factor $\tau$ on the MNIST dataset.}
	\label{table:robust}  
\end{table}

\begin{figure*}[t]
    \centering
	\includegraphics[width=\linewidth]{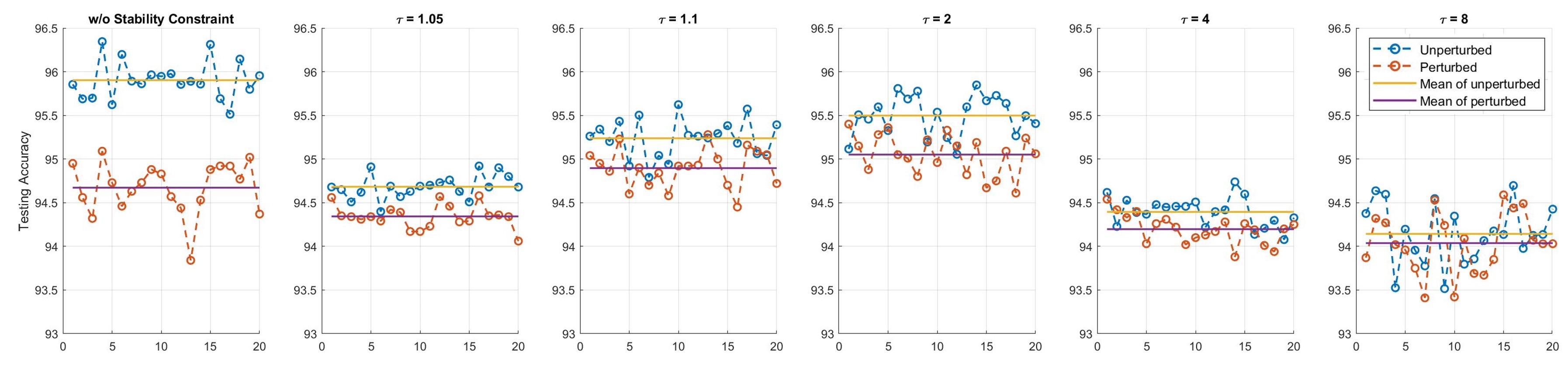}
	\caption{The above figures depict performance with both unperturbed and perturbed data for all models trained with various scaling factor $\tau$ on the MNIST dataset. Each data point represents the average result from $20000$ sampled data sequences.}
    \label{fig:perturb}
\end{figure*}

\subsection{Campus Map Dataset}
In the second case study, we test our model on the Campus Map dataset \cite{liu2021classification}. The training and testing are performed in the same platform with the first case study with a learning rate $l_r = 0.00001$. The testing result is presented in Table \ref{table:accuracy} and Fig. \ref{fig:loss_campus}, which is validated over five random seeds. It is shown that the stable model obtains a comparable performance as the unstable model on the Campus map dataset and enjoys the robustness guarantee when the input perturbation satisfies certain conditions. It should also be emphasized that the performance of the map classification will reach $97\%$ by using a team of communicating robots \cite{liu2020distributed}. 

We also successfully performed a real-world experiment of map classification with aerial robots. Some snapshots taken from the experiments are shown in Fig. \ref{fig:step}, and a full experiment video can be found at \url{https://youtu.be/nsnPFAvJLoY}.

\begin{figure}
    \centering
	\includegraphics[width=\linewidth]{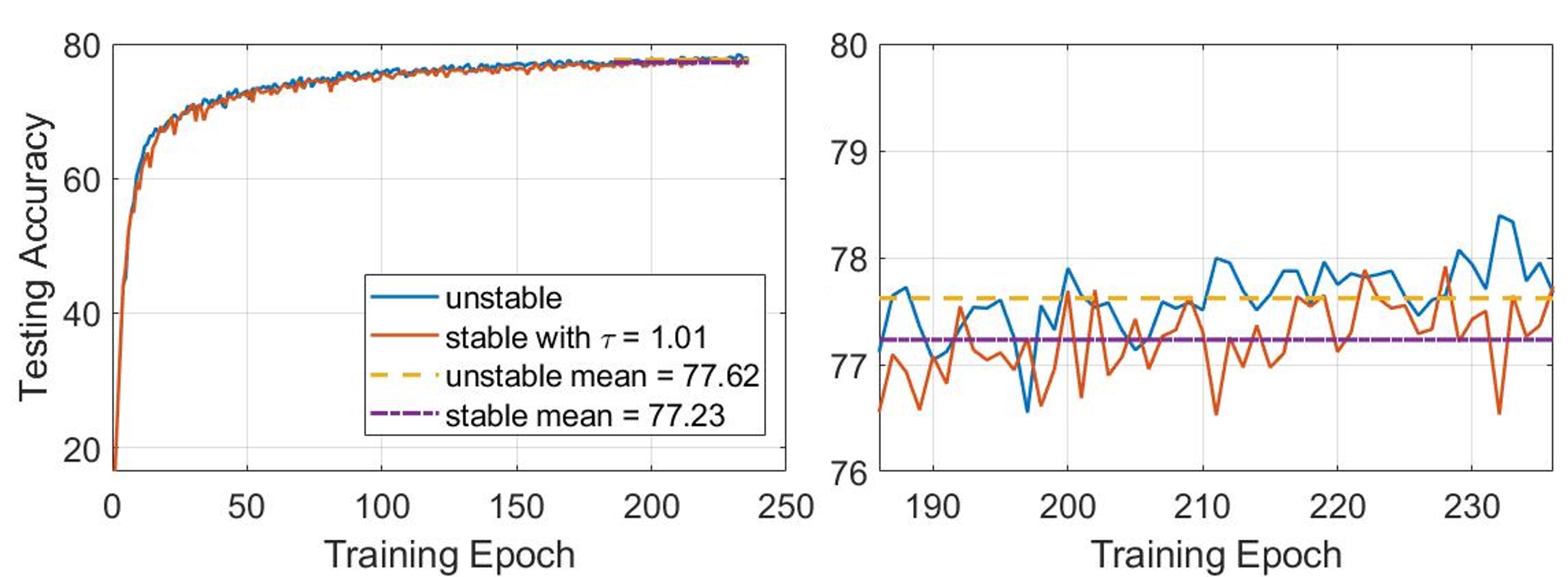}
	\caption{This figure depicts the training progress with and without the stability constraint \eqref{eq:constraint} on the Campus map dataset.}
    \label{fig:loss_campus}
\end{figure}

\begin{figure*}
    \centering
	\includegraphics[width=\linewidth]{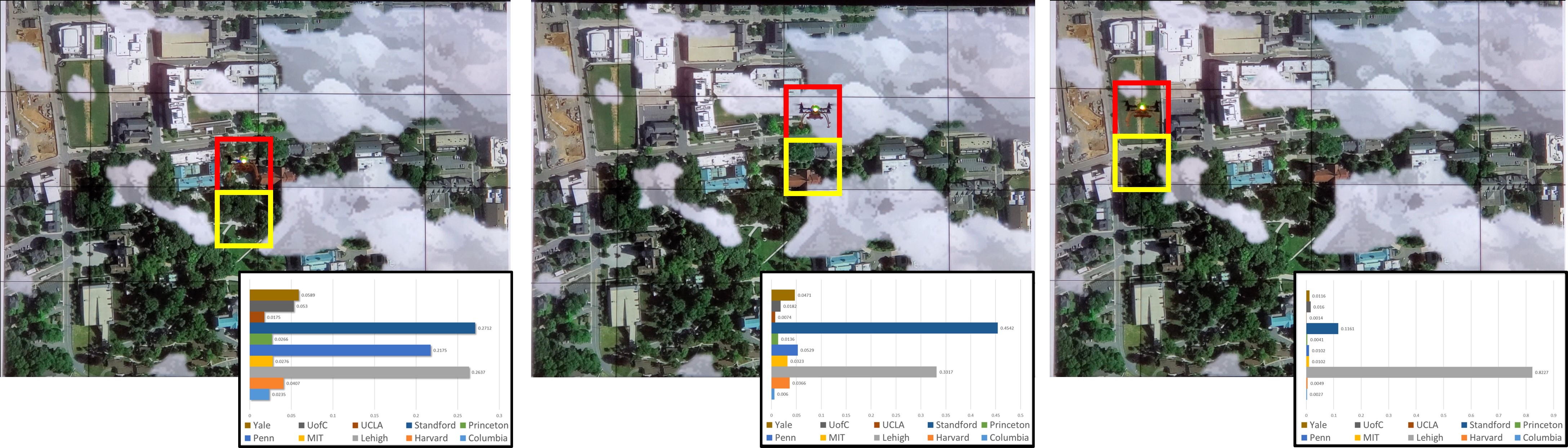}
	\caption{The above figures are snapshots taken while the aerial robot is flying over the map of the Lehigh University campus. The robot's location and observation are denoted with the red and the yellow box. The bar plot shows the real-time belief vector. Eventually, the robot correctly classifies the map as the Lehigh University (gray bar). For more detail, please see the full experiment video at \url{https://youtu.be/nsnPFAvJLoY}.}
    \label{fig:step}
\end{figure*}


\section{Conclusion} \label{sec:conclusion}

We present a framework to analyze the robustness properties of stable RNNs with sequential inputs for classification purposes. It is shown that every trained RNN exhibits robust classification with respect to some bounded perturbations. We quantify robustness bounds in terms of trainable weight matrices. Our results are significant as they reveal interplay among various design (trainable) parameters. Our extensive simulations and one real-world experiment support and validate the usefulness of our theoretical findings.

\printbibliography
\end{document}